\theoremstyle{plain}
\newtheorem{theorem}{Theorem}[section]
\theoremstyle{definition}
\newtheorem{definition}[theorem]{Definition}
\newtheorem{assumption}[theorem]{Assumption}
\theoremstyle{remark}
\setlist[itemize]{leftmargin=*}
\setlist[enumerate]{leftmargin=*}
\def\spacingset#1{\renewcommand{\baselinestretch}%
	{#1}\small\normalsize} \spacingset{1}
\newcommand*{\addFileDependency}[1]{
	\typeout{(#1)}
	%
	%
	\@addtofilelist{#1}
	%
	\IfFileExists{#1}{}{\typeout{No file #1.}}
}\makeatother
\newcommand*{\myexternaldocument}[1]{%
	\externaldocument{#1}%
	\addFileDependency{#1.tex}%
	\addFileDependency{#1.aux}%
}
\title{\textbf{One-Step Bellman Alignment Enables \\ Provably Efficient Transfer in Online RL}}
\author{
	Elynn Chen$^\sharp$\thanks{Correspondence to E.~Chen at \url{elynn.chen@stern.nyu.edu} and Y.~Yan at \url{yujun.yan@dartmouth.edu}}  \hspace{2ex}
	Enpei Zhang$^\natural$ \hspace{2ex}
	Jinhang Chai$^\diamond$ \hspace{2ex}
	Yujun Yan$^{\dag*}$ \\
	\\ \normalsize
	$^{\sharp}$Department of Technology, Operations, \& Statistics, New York University \\
	$^\diamond$Department of Operations Research \& Financial Engineering, Princeton University \\
	$^{\natural\dag}$ Department of Computer Science, Dartmouth College
}
\date{\today}
\begin{document}

\begingroup
\singlespacing
\maketitle
\endgroup



\begin{abstract}
\spacingset{1.28}
We study online transfer reinforcement learning (RL) in episodic Markov decision processes, where experience from related source tasks is available during learning on a target task. A fundamental difficulty is that task similarity is typically defined in terms of rewards or transitions, whereas online RL algorithms operate on Bellman regression targets. As a result, naively reusing source Bellman updates introduces systematic bias and invalidates regret guarantees.

We identify one-step Bellman alignment as the correct abstraction for transfer in online RL and propose re-weighted targeting (RWT), an operator-level correction that retargets continuation values and compensates for transition mismatch via a change of measure. RWT reduces task mismatch to a fixed one-step correction and enables statistically sound reuse of source data.

This alignment yields a two-stage RWT $Q$-learning framework that separates variance reduction from bias correction. Under RKHS function approximation, we establish regret bounds that scale with the complexity of the task shift rather than the target MDP. Empirical results in both tabular and neural network settings demonstrate consistent improvements over single-task learning and na\"{i}ve pooling, highlighting Bellman alignment as a model-agnostic transfer principle for online RL.
\end{abstract}

\medskip
\noindent\textit{Keywords:} Transfer reinforcement learning; $Q$-learning, Optimistic under uncertainty; Regret analysis; RKHS;

\newpage
\spacingset{1.9}
\section{Introduction}

Reinforcement learning agents are increasingly deployed in settings where experience from related tasks is available, such as simulation-to-real transfer, personalization across users, and decision-making across markets. 
Leveraging such source experience to accelerate learning on a new target task is natural in practice, yet providing principled guarantees for {\it online} transfer under exploration remains challenging.
While many transfer heuristics perform well empirically, existing theory often treats the target task in isolation or relies on assumptions misaligned with how online RL algorithms actually learn \cite{taylor2009transfer,brunskill2013sample}.

A central obstacle is that {\it task similarity is rarely defined at the level where online reinforcement learning operates}. Modern RL algorithms learn by regressing {\it one-step Bellman targets}, which depend jointly on the continuation value and the transition distribution (e.g. \citet{yang2020function,jin2020provably}. 
As a result, similarity assumptions stated in terms of rewards, transitions, or value functions do not directly translate into reusable Bellman samples. Na\"{i}vely pooling source Bellman updates therefore introduces systematic bias even when tasks are intrinsically close, breaking optimism-based exploration and invalidating regret guarantees \cite{srinivas2009gaussian}. 

We show that this failure is structural rather than statistical. The object governing transfer in online RL is the {\it one-step Bellman mismatch} between tasks. Under the standard Bellman operator, this mismatch depends intrinsically on the continuation value and thus varies across dynamic programming iterations, making it impossible to represent as a fixed state-action correction. Consequently, without explicit alignment, there is no principled way to reuse source Bellman samples in online reinforcement learning.

To resolve this obstruction, we introduce {\it re-weighted targeting (RWT)}, an operator-level Bellman alignment that retargets continuation values to the target task and corrects transition mismatch via a change of measure. This alignment removes continuation-value dependence and reduces task mismatch to a {\it fixed one-step correction}. As a result, source Bellman samples become Bellman-consistent for the target task up to a simple residual, making transfer statistically well-posed.

Bellman alignment has direct algorithmic and theoretical consequences. Once mismatch is reduced to a fixed one-step correction, value estimation naturally decomposes into two stages: a {\it source-based baseline} that leverages aligned Bellman targets for variance reduction, and a {\it target-based correction} that learns the structured task shift. Under RKHS function approximation, we establish regret bounds that scale with the complexity of the task shift rather than the target MDP, yielding strict improvements over single-task learning when the shift is structured. Empirically, we observe consistent improvements over both single-task learning and na\"{i}ve pooling across tabular and neural network $Q$-learning, highlighting Bellman alignment as a principled, model-agnostic transfer mechanism.

\textbf{Contributions.}
Our main contributions are:

(i) \emph{Conceptual:} We identify \emph{one-step Bellman alignment} as the correct abstraction for \emph{online} transfer and show that naive Bellman reuse is structurally biased;

(ii) \emph{Algorithmic:} We propose \emph{re-weighted targeting (RWT)}, an operator-level alignment that enables Bellman-consistent reuse of source data and yields a two-stage Q-learning framework for online learning with exploration.

(iii) \emph{Theoretical:} We establish regret bounds under RKHS function approximation that scale with the complexity of the task shift rather than the target MDP;

(iv) \emph{Empirical:} We demonstrate consistent gains over single-task learning and naïve pooling in both tabular and neural network settings.

\subsection{Related Work and Our Distinction}

\textbf{Meta and multitask RL.}
A large literature studies how experience from related tasks can improve sample efficiency in RL, including multitask and meta-RL approaches that learn shared representations, priors, or initializations \citep{taylor2009transfer,brunskill2013sample,calandriello2014sparse,zhou2025prior}. 
Recent theory establishes benefits of representational transfer under shared structure, such as linear or low-rank representations and shared latent dynamics, typically by reducing the effective dimension of the target problem \citep{hu2021near,agarwal2023representational,sam2024limits,Lee2025MultiTaskRL}. 
These works focus on representation sharing across tasks. In contrast, we identify a distinct obstruction specific to \emph{online} value-based RL: transfer must be compatible with Bellman regression targets under exploration, which is not addressed by representation-level similarity alone.

\textbf{Transfer under task shift.}
A more closely related line studies transfer under structured reward/transition differences, often in finite-horizon settings. 
For example, \citet{chen2025transfer,zhang2025transfer,chen2026transfer} analyze transfer $Q$-learning across related MDPs, and \citet{chen2025datadriven} develop data-driven transfer in \emph{batch} $Q$-learning. 
Recent extensions consider composite MDP structures and transition transfer \citep{chai2025transition}. 
However, existing treatments either operate in offline/batch regimes or rely on forms of reuse that do not explicitly resolve the \emph{continuation-value dependence} of Bellman mismatch in online learning. 
Our work differs by formalizing transfer at the \emph{Bellman-operator level} and providing regret guarantees for online learning under exploration after alignment.

\textbf{Kernelized RL and optimism.}
Our analysis builds on kernelized RL with optimism-based exploration, which yields regret bounds scaling with RKHS complexity measures such as information gain rather than the size of the state--action space \citep{yang2020function,vakili2023kernelized,vakili2024kernel}. 
Prior kernel RL work treats the target task in isolation. 
In contrast, we decompose value estimation into a source-based baseline and a residual correction with distinct RKHS complexity, and design upper confidence bounds that jointly capture uncertainty from both components.

\section{Transferability via Bellman Alignment}
\label{sec:RWT-Aligned-Bellman}

A central difficulty in transfer reinforcement learning is that \emph{task similarity is typically defined at a level misaligned with how online RL algorithms learn}.
Although tasks may be similar in rewards or dynamics, episodic value-based methods learn by regressing one-step Bellman targets, which depend jointly on the continuation value and the transition distribution. 
We show that naive Bellman transfer fails due to continuation-value dependence, introduce re-weighted targeting (RWT) to align Bellman operators at one step, and show that this alignment reduces task mismatch to a fixed, learnable one-step correction.

\subsection{Problem Setup}

We consider episodic Markov decision processes (MDPs) with finite horizon $H$ and discount factor $\gamma \in (0,1]$.
There is a target task, indexed by $m=0$, and a collection of source tasks, indexed by $m \in [M]$.
Each task $m$ is specified by an MDP:
$\mathcal M^{(m)} = \bigl(\mathcal S,\mathcal A,H,\{P_h^{(m)}\}_{h=1}^H,\{R_h^{(m)}\}_{h=1}^H,\gamma\bigr)$ with the same state and action spaces $(\mathcal S,\mathcal A)$ but may differ in reward functions $R_h^{(m)}$ and transition dynamics $P_h^{(m)}$.

The learner interacts online only with the target task $\mathcal M^{(0)}$, collecting one trajectory per episode and incurring regret relative to the optimal target policy.

Source tasks may provide offline datasets or auxiliary interaction (e.g., simulators), collected under arbitrary, possibly non-optimistic policies. Our goal is to leverage such source data to accelerate online learning on the target task without compromising exploration or regret guarantees.

\subsection{Why Na\"{i}ve Bellman Transfer Fails}
\label{sec:naive-Bellman-fails}

A natural approach to transfer is to reuse Bellman updates from source tasks when learning the target task. If source and target MDPs are ``close,'' one might expect that pooling Bellman samples reduces variance and accelerates learning. We show that this intuition is incorrect in online RL.

Let $V^{(m)}_{h+1}(s)=\max_{a'}Q^{(m)}_{h+1}(s,a')$ where $Q^{(m)}_{h+1}(s,a')$ is the action-value function at stage $h{+}1$ for task $m$.
The standard Bellman operator for task $m$ at stage $h$ is
\begin{equation*}
(\mathcal B_h^{(m)}V^{(m)}_{h+1})(s,a)
=
R_h^{(m)}(s,a)
+
\gamma\mathbb E_{s'\sim P_h^{(m)}(\cdot\mid s,a)}
[V^{(m)}_{h+1}(s')].
\end{equation*}
There are two distinct sources of misalignment between $\mathcal B_h^{(m)}$ and the target Bellman operator $\mathcal B_h^{(0)}$.
First, the continuation value $V^{(m)}_{h+1}$ is task-dependent.
Even if transitions were identical, replacing $V^{(0)}_{h+1}$ with $V^{(m)}_{h+1}$ introduces bias unless the value functions coincide.
Second, expectations are taken under the wrong transition distribution $P_h^{(m)}$ instead of $P_h^{(0)}$.

Under the standard Bellman operator, the discrepancy between source and target Bellman backups, 
$(\mathcal B_h^{(m)}V^{(m)}_{h+1})(s,a) - (\mathcal B_h^{(0)}V^{(0)}_{h+1})(s,a)$,
depends intrinsically on the continuation values through both the future-value term and the transition distribution. 
Without imposing strong assumptions on direct similarity between value functions -- which are generally untestable in practice -- pooling source Bellman targets therefore produces persistent bias that does not vanish with additional target data. 
This bias is structural, arising from Bellman-operator misalignment, and is incompatible with optimism-based exploration and regret guarantees in online reinforcement learning.

\subsection{Re-Weighted Targeting for Bellman Alignment}
\label{sec:rwt}

We now introduce \emph{re-weighted targeting (RWT)}, an operator-level correction that removes the continuation-value dependence identified above.

Fix stage $h$ and source task $m$.
Assume that the target transition kernel $P_h^{(0)}$ is absolutely continuous with respect to $P_h^{(m)}$, and define the density ratio
\[
\omega_h^{(m)}(s'\mid s,a)
:=
{p_h^{(0)}(s'\mid s,a)}/{p_h^{(m)}(s'\mid s,a)}.
\]
The {\it RWT-aligned Bellman operator} is defined as 
\begin{equation} 
\label{eqn:RWT-aligned-Bellman-operator}
\begin{aligned}
\bigl(\mathcal B_h^{(m\to 0)} V^{(m)}_{h+1}\bigr)(s,a) :=
R_h^{(m)}(s,a)
+
\gamma\,
\mathbb E_{s'\sim P_h^{(m)}(\cdot\mid s,a)}
\!\left[
\omega_h^{(m)}(s'\mid s,a)\, V^{(0)}_{h+1}(s')
\right].
\end{aligned}
\end{equation}
Crucially, the continuation value $V_{h+1}$ is evaluated under the target task, while the density ratio corrects the transition mismatch.

For any bounded continuation value $V_{h+1}$, define the {\it RWT-aligned Bellman difference}
\[
\Delta_h^{(m)}(s,a)
\;:=\;
\bigl(\mathcal B_h^{(0)} V^{(0)}_{h+1}\bigr)(s,a)
-
\bigl(\mathcal B_h^{(m\to 0)} V^{(m)}_{h+1}\bigr)(s,a),
\]
A direct calculation shows that
\begin{equation}
\label{eqn:aligned-bellman-difference}
\Delta_h^{(m)}(s,a)
\equiv
R_h^{(0)}(s,a)-R_h^{(m)}(s,a),
\end{equation}
which depends only on $(s,a)$ and is independent of $V_{h+1}$.
Thus, re-weighted targeting transforms the task-to-task Bellman difference from a continuation-value-dependent object into a fixed one-step correction that is invariant across dynamic programming iterations.

\subsection{Transferability via Structured Reward Differences}
\label{sec:transferability}

After Bellman alignment, task mismatch is fully characterized by the one-step reward difference
\begin{equation}
\label{eqn:reward-diff}
\Delta_{r,h}^{(m)}(s,a)
:=
R_h^{(0)}(s,a)-R_h^{(m)}(s,a).
\end{equation}
We therefore formalize transferability by imposing structure directly on this reward-level discrepancy.

\begin{assumption}[\bf Structured Reward Difference]
\label{assume:reward-difference}
For each stage $h$ and source task $m$, the reward difference $\Delta_{r,h}^{(m)}$ belongs to a function class $\mathcal F_\Delta$ whose complexity is strictly smaller than that of the class $\mathcal F_R$ used to model the target Bellman backup. 
\end{assumption}
This assumption applies exclusively to the {\it one-step} reward difference, which is intrinsic to the task definition and independent of policies or value functions. We do not assume similarity of value functions across tasks, nor do we impose structure on transition kernels or multi-step value differences, which may remain arbitrarily complex. The assumption is meaningful precisely because RWT renders the Bellman mismatch invariant across iterations: learning $\Delta_{r,h}^{(m)}$ suffices to correct aligned source Bellman targets.

Under Assumption \ref{assume:reward-difference}, source data can be used to construct Bellman-consistent pseudo-labels via re-weighted targeting, while target data is required only to estimate the simpler reward correction. This separation, i.e. variance reduction from sources and bias correction from targets, forms the basis of the algorithmic design in Section \ref{sec:OFU-RWT-Q-general} and enables regret bounds that depend on the complexity of the task shift rather than the full target problem.

\section{RWT $Q$-Learning: Algorithmic Framework}\label{sec:OFU-RWT-Q-general}

This section presents a {\it model-agnostic algorithmic framework for online transfer reinforcement learning} based on the Bellman alignment developed in Section \ref{sec:RWT-Aligned-Bellman}. We refer to the resulting method as {\bf RWT $Q$-learning}. Algorithm~\ref{alg:ofu-rwt} gives an optimism-based instantiation, termed {\bf OFU-RWT $Q$-learning}. While OFU is used for theoretical analysis, the Bellman alignment mechanism is independent of the exploration strategy and can be combined with alternatives such as $\varepsilon$-greedy, as used in our empirical evaluation.

Learning proceeds episodically: in each episode, the agent rolls out a policy on the target task, then performs Bellman-aligned updates that combine a source-based baseline for variance reduction with a target-based correction for the residual task shift, before deploying the updated policy in the next episode.

\textbf{Notation.}
We denote by $N$ the total number of episodes of interaction with the target task, each of horizon $H$ (so $T=NH$ total steps). We further denote by $\kappa$ the source-to-target sampling ratio, meaning that by episode $n$ the $m$-th source task provides $n^{(m)}=\kappa^{(m)}n$ samples, and we write $\kappa=\sum_{m=1}^M \kappa^{(m)}$.

\subsection{Synchronous Bellman Updates with RWT}
\label{sec:transfer-RWT-Bellman-update}

After episode $n$, the learner performs {\it synchronous backward updates} of the stage-wise $Q$-functions $\{Q_{n+1,h}\}_{h=1}^H$, starting from $h=H$ down to $1$. 
Let the continuation value be $V_{n,h+1}(s) := \max_{a\in\mathcal A} Q_{n,h+1}(s,a)$.
With transfer, Bellman updates are constructed using RWT, which enables principled reuse of source data while preserving Bellman correctness for the target task.

\textbf{Stage I. (Source-based Bellman alignment).}
For each source task $m$, given the density-ratio estimators $\widehat\omega_h^{(m)}$, we construct {\it RWT-aligned Bellman pseudo-labels}
\begin{equation}\label{eqn:rwt-bellman-pseudo-labels}
y_{i,h}^{(m\to 0)}
:=
r_{i,h}^{(m)}
+
\gamma\,\widehat\omega_h^{(m)}(s_{i,h+1}^{(m)}\mid s_{i,h}^{(m)},a_{i,h}^{(m)})
\, V_{h+1}^{(0)}(s_{i,h+1}^{(m)}).
\end{equation}
By construction, these pseudo-labels are Bellman-consistent with the target task up to the one-step reward difference identified in Section \ref{sec:RWT-Aligned-Bellman}.
Pooling aligned pseudo-labels across source tasks yields a {\it baseline estimator}
$$
Q^{\text{base}}_{n,h}
\;\approx\;
\mathcal B_h^{(0)} V^{(0)}_{n,h+1}
-
\Delta^{(m)}_{r,h},
$$
which leverages source data to reduce estimation variance without introducing Bellman bias.

\textbf{Stage II. (Target-based correction).} 
Using target-task samples, we form residual labels
\begin{equation}\label{eqn:source-pseudo-labels-residuals}
y_{i,h}^{(0)} = r_{i,h}^{(0)}+\gamma V_{h+1}^{(0)}(s_{i,h+1}^{(0)}),\;
z_{i,h}
=
y_{i,h}^{(0)} - Q^{\text{base}}_{n,h}(s_{i,h}^{(0)},a_{i,h}^{(0)}),
\end{equation}
and estimate a correction $\delta_{n,h}$ that captures the structured one-step reward difference. The resulting transfer update is
\begin{equation} \label{eqn:Q_{n,h}^{trans}}
Q_{n,h}^{\text{trans}}
=
Q^{\mathrm{base}}_{n,h}
+
\delta_{n,h}.   
\end{equation}
This two-stage update directly mirrors the Bellman-level decomposition in Section~\ref{sec:RWT-Aligned-Bellman}: {\it source data contributes primarily to variance reduction, while target data is used to learn the structured task shift}.

\subsection{Exploration and Policy Deployment}
\label{sec:transfer-RWT-deploy}

To ensure exploration, we adopt the principle of optimism in the face of uncertainty. After the transfer update in Section \ref{sec:transfer-RWT-Bellman-update}, we construct an optimistic estimate
$$
Q_{n,h}^{\text{ucb}}(s,a)
\;:=\;
Q^{\text{trans}}_{n,h}(s,a)
+
b_{n,h}(s,a),
$$
where the exploration bonus $b_{n,h}$ accounts for uncertainty from both stages of estimation: the source-based baseline (including finite source data and density-ratio estimation) and the target-based correction. An explicit form of $b_{n,h}$ for the RKHS instantiation is given in Section \ref{sec:algorithm-rkhs}.

We then set $Q_{n+1,h} = Q_{n,h}^{\text{ucb}}$. 
In episode $n+1$, the agent follows the greedy policy:
$$
a_{n+1,h} \in \arg\max_{a\in\mathcal A} Q_{n+1,h}(s_{n+1,h},a),\; h=1,\ldots,H.
$$
While optimism underpins our theoretical analysis, RWT $Q$-learning itself is exploration-agnostic and can be paired with alternative strategies, such as $\varepsilon$-greedy, when upper confidence bounds are difficult to derive (e.g., under neural network function approximation). We adopt this approach in our empirical evaluation with DQN.

\subsection{Algorithm Summary}

Algorithm~\ref{alg:ofu-rwt} summarizes OFU-RWT $Q$-learning, an optimism-based instantiation of RWT $Q$-learning. Learning proceeds episodically on the target task. In each episode, the agent rolls out a policy on the target task, collects a single trajectory, and then performs synchronous backward Bellman updates.

At each stage, RWT-aligned source pseudo-labels are first used to fit a source-based baseline, which leverages source data for variance reduction while remaining Bellman-consistent with the target task. Using target-task data, a correction term is then estimated to capture the structured one-step task shift. The baseline and correction are combined into a transfer estimate, which is augmented with an exploration bonus to form the updated $Q$-function for the next episode.

\begin{algorithm}[tb!]
\caption{OFU-RWT $Q$-Learning (Model-Agnostic)}
\label{alg:ofu-rwt}
\begin{algorithmic}[1]
\STATE {\bfseries Input:} Horizon $H$, discount $\gamma$; target task $\mathcal M^{(0)}$ (online);
source datasets $\{\mathcal D^{(m)}\}_{m=1}^K$ (online or offline);
density-ratio estimators $\{\widehat\omega_h^{(m)}\}$ (or known $\omega_h^{(m)}$);
function classes $\mathcal F_R$ (baseline) and $\mathcal F_\Delta$ (correction).
\STATE {\bfseries Output:} Policies $\{\pi_n\}_{n=1}^N$ and Q-functions $\{Q_{n,h}\}$.

\STATE {\bfseries Initialization:} 
\STATE Set $Q_{1,h}(s,a)\leftarrow H$ for all $h\in[H]$ and all $(s,a)$.
\STATE Set $V_{1,h}(s)\leftarrow \max_a Q_{1,h}(s,a)$.

\FOR{$n=1,2,\dots,N$}
\STATE {\bfseries (A) Rollout on target task.}
\STATE Observe initial state $s_{n,1}$.
\FOR{Each stage $h=1,\cdots, H$} 
\STATE Select 
$a_{n,h}(s)=\arg\max_a Q_{n,h}(s,a)$.
\STATE Execute $a_{n,h}(s)$ in $\mathcal{M}^{(0)}$, and observe reward $r_{n,h}$ and next state $s_{n,h+1}$. 
\ENDFOR
\STATE Store the trajectory \\
$\tau_n=\{(s_{n,h},a_{n,h},r_{n,h},s_{n,h+1})\}_{h=1}^H$.

\STATE {\bfseries (B) Synchronous backward Bellman updates with RWT.}
\STATE Set $V_{n,H+1}\equiv 0$.
\FOR{$h=H,H-1,\dots,1$}
\STATE \underline{\em Stage I (Source-based Bellman alignment):}
\STATE Construct RWT-aligned pseudo-labels $y_h^{(m\to 0)}$ from $\{\mathcal D^{(m)}\}_{m=1}^K$ using $V_{n,h+1}$ via \eqref{eqn:rwt-bellman-pseudo-labels}, and fit a baseline estimator $Q^{\mathrm{base}}_{n,h}\in \mathcal F_R$.

\STATE \underline{\em Stage II (Target-based correction).}
\STATE Using target data $\{\tau_i\}_{i\le n}$ and residual labels via \eqref{eqn:source-pseudo-labels-residuals}, estimate correction $\delta_{n,h}\in\mathcal F_\Delta$.
\STATE Set $Q^{\mathrm{trans}}_{n,h} \leftarrow Q^{\mathrm{base}}_{n,h} + \delta_{n,h}$.

\STATE \underline{\em (Optimistic $Q$-update).}
\STATE Set $Q_{n+1,h} \leftarrow Q^{\mathrm{trans}}_{n,h} + b_{n,h}$.
\STATE Set $V_{n,h}(s)\leftarrow\max_a Q_{n,h}(s,a)$.
\ENDFOR

\STATE {\bfseries (C) Policy update.}
\STATE For each $h\in[H]$, set $V_{n+1,h}(s)\leftarrow\max_a Q_{n+1,h}(s,a)$ and $\pi_{n+1}$ greedy with respect to $Q_{n+1,h}$.
\ENDFOR
\end{algorithmic}
\end{algorithm}
Section~\ref{sec:rkhs-assume} instantiates this procedure under RKHS function approximation and establishes regret guarantees for the optimism-based variant.
Section~\ref{sec:exp} evaluates RWT $Q$-learning under both tabular and neural network implementations, using $\varepsilon$-greedy exploration when explicit confidence bounds are difficult to derive.

\section{RKHS Instantiation and Regret Analysis}\label{sec:rkhs-assume}

This section instantiates RWT $Q$-learning under reproducing kernel Hilbert space (RKHS) function approximation and establish regret guarantees for the optimism-based variant.
Building on the RWT Bellman alignment framework of Section \ref{sec:RWT-Aligned-Bellman} and the algorithmic structure in Section \ref{sec:OFU-RWT-Q-general}, we formalize the structured one-step task shift in RKHS, derive explicit estimators and exploration bonuses, and analyze the resulting regret. The analysis highlights how Bellman alignment localizes statistical complexity to the task shift, yielding regret bounds that depend on the complexity of the shift rather than that of the target MDP.

\subsection{RKHS Modeling of Structured Task Shift}\label{sec:transferable-rkhs}

We now specify an RKHS instantiation that formalizes the structured one-step task shift identified in Section \ref{sec:RWT-Aligned-Bellman}. Throughout, let $\mathcal K$ be an RKHS over $\mathcal S\times\mathcal A$ with kernel $k$ and feature map $\phi$.

We assume that the {\it target Bellman backup} admits an RKHS representation: for each stage $h$,
$$
(\mathcal B^{(0)}_h V_{h+1})(\cdot,\cdot)\in\mathcal K,
$$
with uniformly bounded RKHS norm. This assumption is standard in kernelized value-based RL and places no restriction on policy structure or value-function similarity across tasks \citep{yang2020function,vakili2023kernelized}.

\textbf{Sufficient conditions.} As a concrete sufficient condition, if the target reward $R^{(0)}_h\in\mathcal K$ and the transition kernel admits a bounded linear operator representation in $\mathcal K$, then the Bellman backup belongs to $\mathcal K$ for any bounded continuation value. We emphasize, however, that our analysis does not rely on explicitly modeling transitions; this condition is provided only for intuition.

Following the Bellman alignment framework of Section \ref{sec:RWT-Aligned-Bellman}, {\it transferability is imposed exclusively on the aligned one-step mismatch}, not on value functions or multi-step Bellman differences.

\textbf{Structured one-step reward difference.}
For each stage $h$ and source task $m$, recall the aligned one-step reward difference
$$
\Delta^{(m)}_{r,h}(s,a):=R^{(0)}_h(s,a)-R^{(m)}_h(s,a).
$$
We assume this discrepancy lies in a lower-complexity RKHS. Further details are left in Appendix~\ref{append:technical},
\begin{assumption}[RKHS-Structured Task Shift]\label{assume:rkhs-structured-task-shif}
For each stage $h$ and source task $m$,  there exists an RKHS $\widetilde{\mathcal K}\subseteq\mathcal K$ with feature map $\widetilde\phi$ such that
$$
\Delta^{(m)}_{r,h}\in\widetilde{\mathcal K},\qquad
\|\Delta^{(m)}_{r,h}\|_{\widetilde{\mathcal K}}\le B_\Delta.
$$
\end{assumption}
The inclusion $\widetilde{\mathcal K}\subseteq\mathcal K$ captures the assumption that the {\it task shift is simpler than the ambient problem}, e.g., smoother, lower-rank, or lower-dimensional. Crucially, we {\it do not} assume similarity of value functions or multi-step Bellman differences across tasks, which are policy-dependent and generally unverifiable.

\subsection{OFU-RWT $Q$-Learning under RKHS Approximation} \label{sec:algorithm-rkhs}

We now instantiate OFU-RWT Q-learning under the RKHS assumptions. This subsection makes explicit how Algorithm \ref{alg:ofu-rwt} is realized with kernel ridge regression and how uncertainty from the two learning stages is quantified.

\textbf{Stage I. (Source-based Bellman alignment).} At each episode $n$ and stage $h$, RWT-aligned pseudo-labels constructed via \eqref{eqn:rwt-bellman-pseudo-labels} are used to estimate a {\it source-based baseline} $Q^{\mathrm{base}}_{n,h}\in\mathcal K$. Concretely, we perform kernel ridge regression
\begin{align}
\label{equ:initial}
Q^{\text{base}}_{n,h} 
\arg\min_{f\in\mathcal K}
\sum_{m=1}^M\sum_{i=1}^{n^{(m)}}
\Big(
f(s_{i,h}^{(m)},a_{i,h}^{(m)})
-
y_{i,h}^{(m\to 0)}
\Big)^2
+
\lambda \|f\|_{\mathcal K}^2.
\end{align}
This estimator leverages abundant source data to reduce variance while remaining Bellman-consistent with the target task up to the structured one-step correction.
No structural assumptions are imposed beyond the RKHS representability of the target Bellman backup.

\textbf{Stage II. (Target-based correction).}
Using target-task data collected up to episode $n$, we estimate a {\it correction term} $\delta_{n,h}\in\widetilde{\mathcal K}$ that captures the structured one-step task shift. Given residual labels defined in \eqref{eqn:source-pseudo-labels-residuals}, we solve
\begin{align}
\label{equ:debiasing}
\delta_{n,h}
=
\arg\min_{f\in\widetilde{\mathcal K}}
\sum_{i=1}^{n}
\Big(
f(s_{i,h}^{(0)},a_{i,h}^{(0)})
-
z_{i,h}
\Big)^2
+
\widetilde\lambda \|f\|_{\widetilde{\mathcal K}}^2.
\end{align}
The resulting {\it transfer estimate} 
$
Q^{\mathrm{trans}}_{n,h}
=
Q^{\mathrm{base}}_{n,h}+\delta_{n,h}
$
combines variance reduction from source data with bias correction from target data, reflecting the Bellman-aligned decomposition of Section \ref{sec:RWT-Aligned-Bellman}.

\textbf{Optimistic $Q$-update.}
To support exploration, we augment the transfer estimate with an exploration bonus,
$
Q^{\text{ucb}}_{n,h}(s,a)
=
Q^{\mathrm{trans}}_{n,h}
+
b_{n,h}(s,a).
$
The exploration bonus $b_{n,h}(s,a)$ accounts for uncertainty from both stages: finite source data (including density-ratio estimation) and online learning of the correction. For RKHS regression, it admits the form (Lemma \ref{lemma:main}):
\begin{align}
\label{equ:bonus}
b_{n,h}(s,a) = 
& \beta_{n,m} \sqrt{\phi(s,a)^\top (\Lambda_{n,h}^{(m)}+\lambda I)^{-1}\phi(s,a)} \nonumber \\
& +\beta_{n,0} \sqrt{\widetilde\phi(s,a)^\top (\Lambda_{n,h}^{(0)}+\widetilde\lambda I)^{-1}\widetilde\phi(s,a)}
\end{align}
with confidence parameters $\beta_{n,m}$ and $\beta_{n,0}$ independent of $(s,a)$.
Here $\Lambda^{(m)}_{n,h}$ and $\Lambda^{(0)}_{n,h}$ denote the empirical covariance operators induced by the baseline and correction feature maps $\phi$ and $\widetilde\phi$, respectively, using source samples from task $m$ and target samples up to episode $n$.

\subsection{Regret Analysis}

We now state the regret guarantee for the RKHS instantiation of Algorithm \ref{alg:ofu-rwt}.
Proofs are deferred to Appendix \ref{proof:regret}. 

\subsubsection{Complexity Measures and Their Roles}
The regret bound depends on standard RKHS complexity quantities, reflecting the distinct statistical roles of the source-based baseline and the target-based correction.

\begin{definition}[Baseline RKHS complexity (source stage)] \label{definition:rkhs-complexity-source}
Let $T_{\mathcal K}$ denote the integral operator associated with kernel $\mathcal K$. The effective dimension at regularization level $\lambda>0$,  
$$
\mathcal{N}_0(\lambda)=\operatorname{Tr}\left(T_{\mathcal K}(T_{\mathcal K}+\lambda I)^{-1}\right), 
$$
controls the estimation error of kernel ridge regression with i.i.d. samples.
We assume $\mathcal{N}_0(\lambda)\lesssim \lambda^{-2\beta_0}$, which holds for common kernels with polynomial eigenvalue decay.
\end{definition}
The $N_0(\lambda)$ controls estimation error for kernel ridge regression with i.i.d.~source trajectories. This quantity appears when bounding the baseline estimation error terms in Lemma \ref{lemma:main}, where source samples are independent of the target trajectory and standard RKHS concentration applies.

\begin{definition}[Correction RKHS complexity (target stage)] \label{definition:rkhs-complexity-correction}
For the correction RKHS $\widetilde{\mathcal K}$, target data are collected online and are adaptively dependent. Two complexity measures are therefore required.
\vspace{-2ex}
\begin{itemize}
\item \textbf{Maximal information gain:} 
For the correction kernel $\widetilde{\mathcal K}$, define the maximal information gain 
$$
\Gamma_1(N,\widetilde\lambda)=\sup_{G\in\mathcal G_{\widetilde{\mathcal K},N}}\log\det(I+G/\widetilde\lambda),
$$
where $\mathcal G_{\widetilde{\mathcal K},N}$ is the set of all $N\times N$ Gram matrices induced by $\widetilde{\mathcal K}$.
We assume 
$
\Gamma_1(N,\widetilde\lambda)\;\le\;N^{\beta_1}/\widetilde\lambda^2
$
for $\widetilde\lambda\lesssim1$, which holds for common kernels such as squared exponential or Matérn kernels.

\item \textbf{Uniform covering number:} We further assume that the $L_\infty$-covering number of $\widetilde{\mathcal K}$ satisfies $\log \mathcal{N}_\infty(\widetilde{\mathcal K},\epsilon)\le \epsilon^{-\alpha_1}$. 
For the induced quadratic-form class $\mathcal{F}_{\widetilde{\mathcal K}}(\epsilon,\widetilde\lambda):=\{\widetilde{\phi}^\top Q \widetilde{\phi}: \|Q\|_2\le 1/\widetilde\lambda\}$, we assume $\log\mathcal N_\infty(\mathcal F_{\widetilde{\mathcal K}},\epsilon,\widetilde\lambda)
\;\le\;(\widetilde\lambda\epsilon)^{-\alpha_1}$. 
\end{itemize}
\end{definition}
These conditions enter the proof when controlling self-normalized martingale terms arising from online target data (Appendix \ref{proof:regret}). As is standard in kernelized RL with optimism, effective dimension alone is insufficient in the online setting \cite{yang2020function,vakili2023kernelized}.

Here we define the Maximal Information Gain as the maximum over all $N$-points Gram matrices $G$, here $G_{\widetilde{\mathcal K},N}$ is the class of all $N$ by $N$ gram matrices with respect to kernel $\widetilde{\mathcal K}$. This is the online analog of effective dimension
\begin{align*}
\Gamma_1(N,\widetilde \lambda)=\sup_{G\in G_{\widetilde{\mathcal K},N}}\log\det(I+G/\widetilde \lambda)
\end{align*}
Assume $\Gamma_1(N,\widetilde\lambda)\le N^{\beta_1}/\widetilde\lambda^2$ for $\widetilde\lambda\lesssim 1$.
This holds for squared exponential kernel or Matern kernel. 
For details, we refer the reader to Theorem 5 in \cite{srinivas2009gaussian}.

\subsubsection{Source Data and Density-Ratio Conditions}

We next formalize conditions governing how effectively source data can be exploited.

\begin{assumption}[Source coverage]\label{assume:source-coverage}
We assume a uniform source coverage condition:
\begin{align*}
|\phi(s,a)^\top (\Lambda^{(m)}_{n,h}+\lambda)^{-1}\phi(s,a)|\le C_{\operatorname{cov}}/n^{(m)}.
\end{align*}
holds for some coverage parameter $C_{\operatorname{cov}}$ and every source $m$ and stage $h$.
\end{assumption}
This ensures that source data reduce variance rather than induce ill-conditioned estimates. The condition is used to bound the baseline confidence radius in Lemma \ref{lemma:main} and in the regret decomposition.

\begin{assumption}[Density-ratio estimation error] \label{assume:density-ratio} 
Let $\widehat\omega_{i,h}$ be the estimated density ratio. 
Define the per-episode averaged ratio error at stage $h$ as
$\;\mathcal E_{n,h}^2 := \frac{1}{\kappa n}\sum_{m=1}^M\sum_{i=1}^{n^{(m)}}(\hat\omega^{(m)}_{i,h}-\omega^{(m)}_{i,h})^2$. 
We assume the cumulative error satisfies
\begin{align*}
\sum_{n=1}^N \sqrt{\mathcal E_{n,h}^2}
\lesssim 
N^{\frac{2\alpha_0+1}{2(\alpha_0+1)}}\kappa^{-\frac{1}{2(\alpha_0+1)}},
\end{align*}
where $N$ the total number of episodes of interaction with the target task and $\kappa$ is the source-target sampling ratio defined in Section \ref{sec:OFU-RWT-Q-general}.
\end{assumption}

This term appears explicitly in the baseline error decomposition (term $T_2^{(\omega)}$ in Lemma \ref{lemma:main}) and captures {\it imperfect Bellman alignment} due to density-ratio estimation. When $\kappa$ is large or ratios are accurate, this contribution becomes lower order. Estimation strategies are discussed in Appendix~\ref{append:density-ratio}.

\subsubsection{Main Regret Theorem}
We are now ready to state the regret bound.

\begin{theorem}[Regret under RKHS Instantiation]\label{thm:transfer-q-rkhs-regret}
Under the baseline and correction RKHS complexity conditions (Definition \ref{definition:rkhs-complexity-source}--\ref{definition:rkhs-complexity-correction}), together with source coverage (Assumption~\ref{assume:source-coverage}) and density-ratio estimation (Assumption~\ref{assume:density-ratio}), with probability at least $1-2/(NH)$, the regret satisfies
\begin{align*}
\operatorname{Regret}(N) &\le H\sqrt{(N^{\beta_1})\Big[N^{\beta_1+1}+N^{\alpha_1+1}\Big]}\\ 
&+ H\sqrt{NC_{\operatorname{cov}}}N^{\frac{\alpha_0}{2(\alpha_0+1)}}\kappa^{-\frac{1}{2(\alpha_0+1)}}
\end{align*}
\end{theorem}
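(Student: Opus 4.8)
The plan is to follow the standard optimism-based regret template, adapted to the two-stage RWT decomposition. The argument proceeds in three phases: (i) establish that the bonus $b_{n,h}$ in \eqref{equ:bonus} makes $Q_{n,h}$ optimistic, i.e. $Q_{n,h}(s,a)\ge Q^*_h(s,a)$ for all $(s,a,h,n)$ with high probability; (ii) use optimism and a recursive value-difference decomposition to bound the regret by the cumulative sum of bonuses plus a martingale remainder; and (iii) bound the two additive pieces of the cumulative bonus separately, matching them to the two terms in the stated bound.

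For phase (i), I would invoke Lemma \ref{lemma:main} to control $|Q^{\mathrm{trans}}_{n,h}(s,a)-(\mathcal B_h^{(0)}V_{n,h+1})(s,a)|$. The key is the Bellman-aligned decomposition of Section \ref{sec:RWT-Aligned-Bellman}: the baseline $Q^{\mathrm{base}}_{n,h}$ targets $\mathcal B_h^{(0)}V_{n,h+1}-\Delta_{r,h}^{(m)}$, so its error splits into a kernel-ridge estimation term on i.i.d. source samples---controlled by the effective dimension $\mathcal N_0(\lambda)$ (Definition \ref{definition:rkhs-complexity-source}) together with the source coverage condition (Assumption \ref{assume:source-coverage})---and a density-ratio term $T_2^{(\omega)}$ measuring imperfect alignment, controlled by Assumption \ref{assume:density-ratio}. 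The correction $\delta_{n,h}$ targets the fixed one-step object $\Delta_{r,h}^{(m)}$ using adaptively collected target data, and its error is governed by a self-normalized martingale bound with confidence radius $\beta_{n,0}$. Combining the two contributions yields the two-part bonus \eqref{equ:bonus}, and optimism follows by backward induction on $h$ (with the usual truncation of values to $[0,H]$), using the fact that RWT removes continuation-value dependence so that the residual is a fixed one-step function lying in $\widetilde{\mathcal K}$.

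For phases (ii)--(iii), the recursive decomposition gives $\sum_n[V_{n,1}(s_{n,1})-V_1^{\pi_n}(s_{n,1})]\lesssim \sum_{n,h} b_{n,h}(s_{n,h},a_{n,h})+\mathcal{M}$, where $\mathcal M$ is a bounded martingale difference sequence controlled by Azuma--Hoeffding and is of lower order. The target-correction part of the bonus is summed via Cauchy--Schwarz and the elliptical-potential (log-determinant) bound, giving $\sum_{n,h}\beta_{n,0}\sqrt{\widetilde\phi^\top(\Lambda^{(0)}_{n,h}+\widetilde\lambda I)^{-1}\widetilde\phi}\lesssim H\beta_{N,0}\sqrt{N\Gamma_1(N,\widetilde\lambda)}$; substituting $\Gamma_1\le N^{\beta_1}/\widetilde\lambda^2$ and the covering-number scaling of $\beta_{N,0}$ (Definition \ref{definition:rkhs-complexity-correction}) reproduces the first term $H\sqrt{N^{\beta_1}[N^{\beta_1+1}+N^{\alpha_1+1}]}$. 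The source-baseline part uses the coverage bound $\sqrt{\phi^\top(\Lambda^{(m)}_{n,h}+\lambda I)^{-1}\phi}\le\sqrt{C_{\operatorname{cov}}/n^{(m)}}$ together with the cumulative density-ratio bound of Assumption \ref{assume:density-ratio}; since the exponent satisfies $\tfrac12+\tfrac{\alpha_0}{2(\alpha_0+1)}=\tfrac{2\alpha_0+1}{2(\alpha_0+1)}$, this reproduces the second term $H\sqrt{NC_{\operatorname{cov}}}\,N^{\alpha_0/(2(\alpha_0+1))}\kappa^{-1/(2(\alpha_0+1))}$.

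The hardest part will be phase (i) for the target correction: because the target trajectory is collected adaptively under the evolving optimistic policy, standard i.i.d. RKHS concentration does not apply, and one must instead establish a self-normalized martingale concentration uniform over the data-dependent quadratic-form class $\mathcal F_{\widetilde{\mathcal K}}$. This is precisely why the uniform covering-number assumption (Definition \ref{definition:rkhs-complexity-correction}) is needed, and discretizing at the optimal resolution is what produces the $N^{\alpha_1+1}$ contribution. A secondary difficulty is ensuring that the density-ratio bias does not destroy optimism: it must be carried as a controlled additive term rather than absorbed into the bonus, which is exactly what isolates it into the second term of the bound rather than inflating the exploration cost of the first.
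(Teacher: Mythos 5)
Your proposal is correct and takes essentially the same route as the paper: optimism via Lemma~\ref{lemma:main} with the two-part bonus (i.i.d.\ kernel-ridge error plus density-ratio term for the source baseline; self-normalized martingale concentration with covering-number discretization for the adaptively collected target correction), followed by the standard regret decomposition with Azuma--Hoeffding for the martingale remainder, the elliptical-potential bound for the correction bonus, and the coverage plus cumulative density-ratio bounds for the source bonus, with the same exponent arithmetic and choice of regularization. The only detail your sketch glosses over is that the correction is regressed on residuals containing the baseline's own estimation error, which the paper controls inside Lemma~\ref{lemma:main} via an explicit interaction term (its $T_7$, split and bounded with a deterministic self-normalized inequality); this is a technical refinement within the same argument rather than a different approach.
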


The regret bound decomposes naturally into two terms. 
\begin{itemize}
\item {\bf Target exploration term.} The {\it first} term depends only on the complexity of task-shift RKHS $\widetilde{\mathcal K}$. This reflects the fact that, after Bellman alignment, exploration is required solely to learn the structured one-step task shift.
\item {\bf Source estimation term.} The {\it second} residual uncertainty from the source-based baseline, including coverage and density-ratio error.
This term decreases with the source-to-target ratio $\kappa$ and becomes negligible when source data are sufficiently abundant.
\end{itemize}

As a sanity check, when $\widetilde{\mathcal K}$ is finite-dimensional (e.g., linear MDPs), the bound recovers the standard $O(H\sqrt N)$ regret. 

\subsubsection{Comparison and Regimes}

\underline{\textit{Single-task RL.}} Without transfer, regret is governed by the ambient RKHS $\mathcal K$, leading to regret of order $\widetilde{\mathcal O}\bigl(HN^{1/2+\beta_0}\bigr)$. 

\underline{\textit{Na\"{i}ve pooled learning.}} 
Pooling Bellman updates without alignment induces persistent bias (Section \ref{sec:RWT-Aligned-Bellman}) and admits no comparable regret guarantee unless tasks are identical; see also Section \ref{sec:exp} for empirical evidence.

\underline{\textit{OFU-RWT transfer.}} By contrast, re-weighted targeting removes Bellman bias at one step and isolates uncertainty to the smaller space $\widetilde{\mathcal K}$. 
When $\alpha_1\le\beta_1$ (a mild condition; see \cite{vakili2023kernelized}) and $\kappa$ is large enough for the source term to saturate, the regret simplifies to
$\operatorname{Regret}(N)
=
\widetilde{\mathcal O}\!\left(HN^{1/2+\beta_0}\right),
$
which is strictly smaller than the single-task rate whenever the task shift is simpler than the ambient problem.

Overall, the theorem shows that Bellman alignment does more than reduce constants: it changes the effective statistical complexity of online RL, replacing dependence on the target MDP with dependence on the structured one-step task difference.

\section{Empirical Experiments} \label{sec:exp}

\begin{figure*}[tb]
\centering
\includegraphics[width=0.3\linewidth]{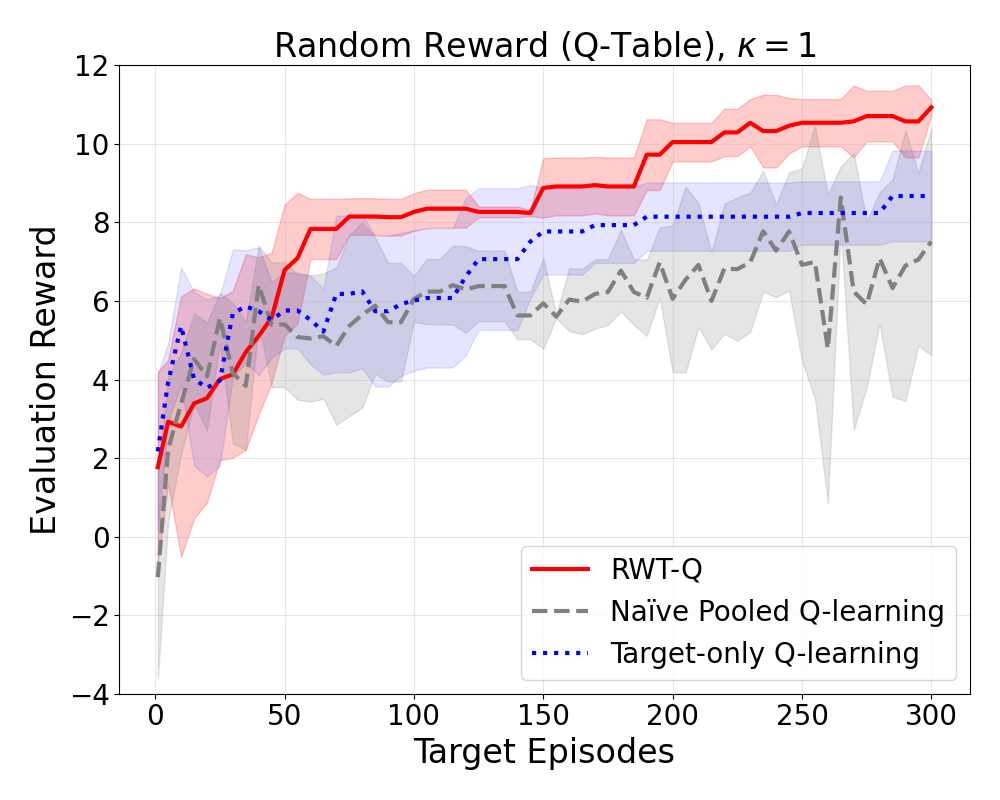}
\includegraphics[width=0.3\linewidth]{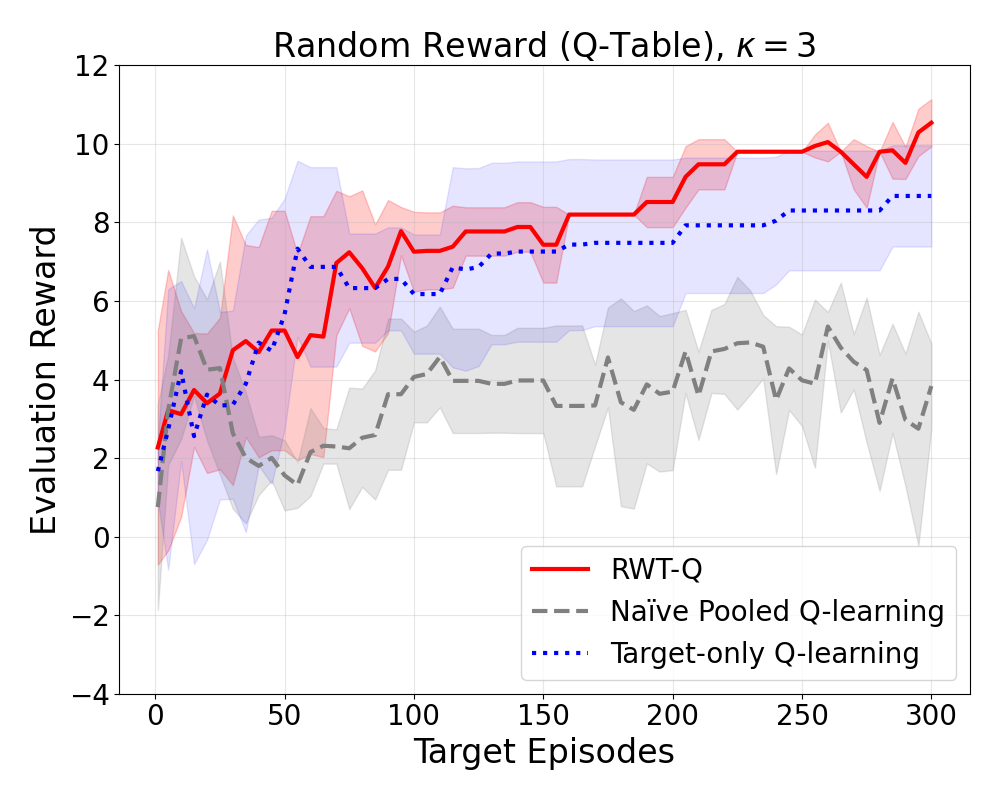}
\includegraphics[width=0.3\linewidth]{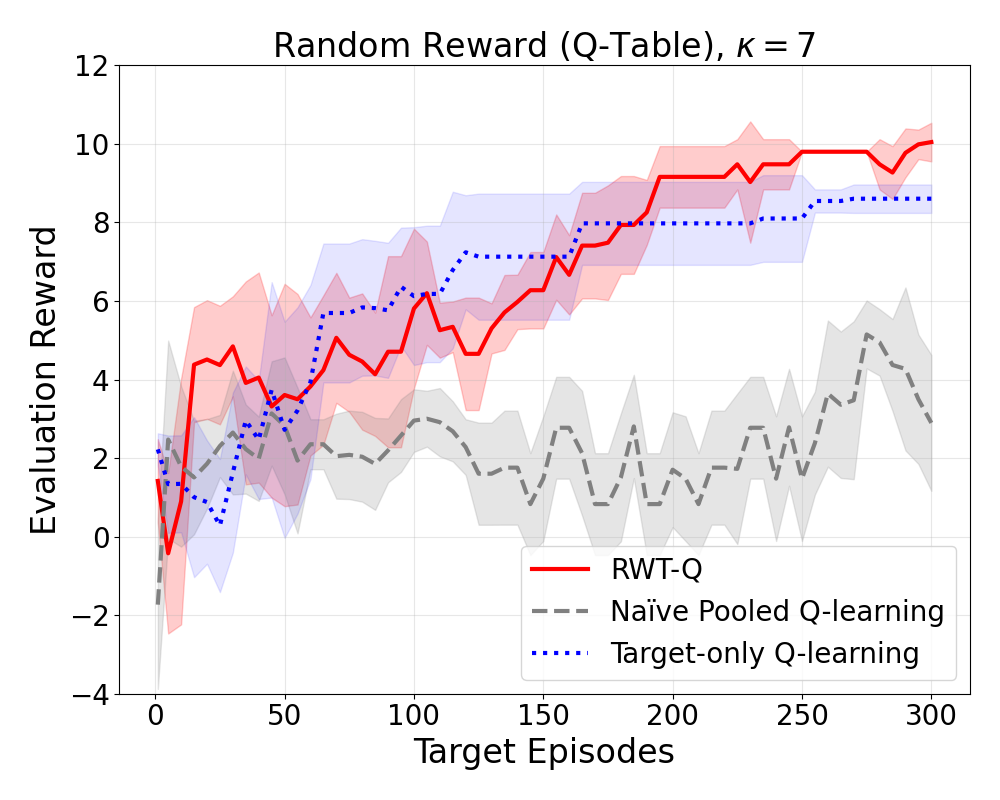}
\includegraphics[width=0.3\linewidth]{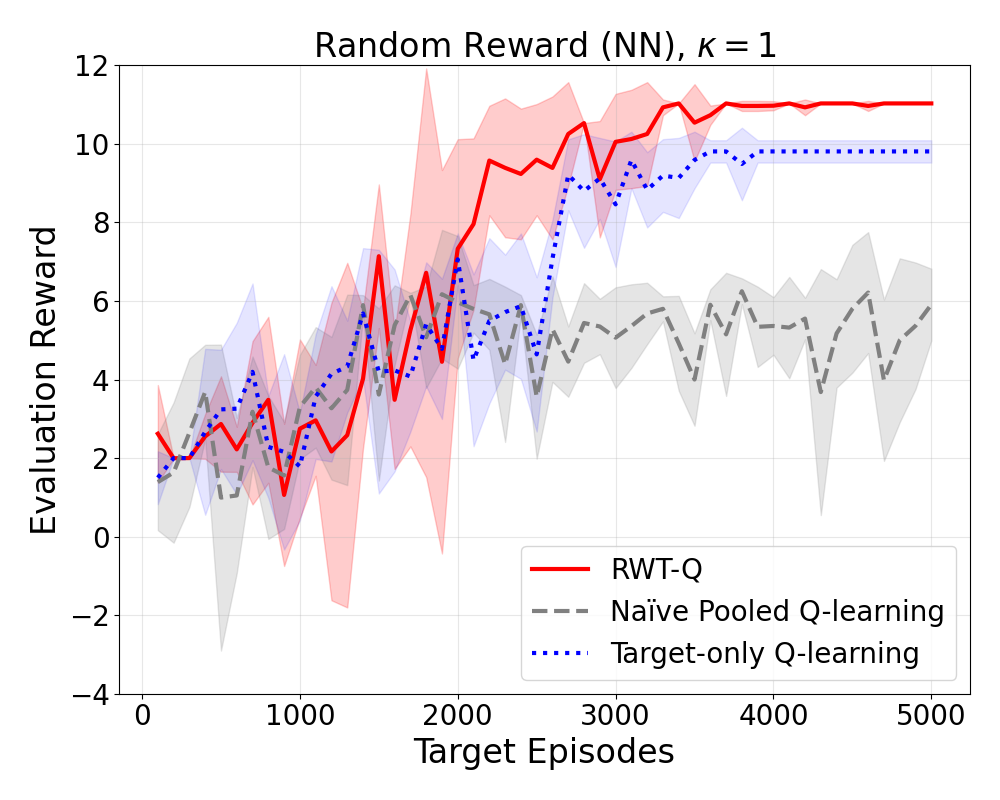}
\includegraphics[width=0.3\linewidth]{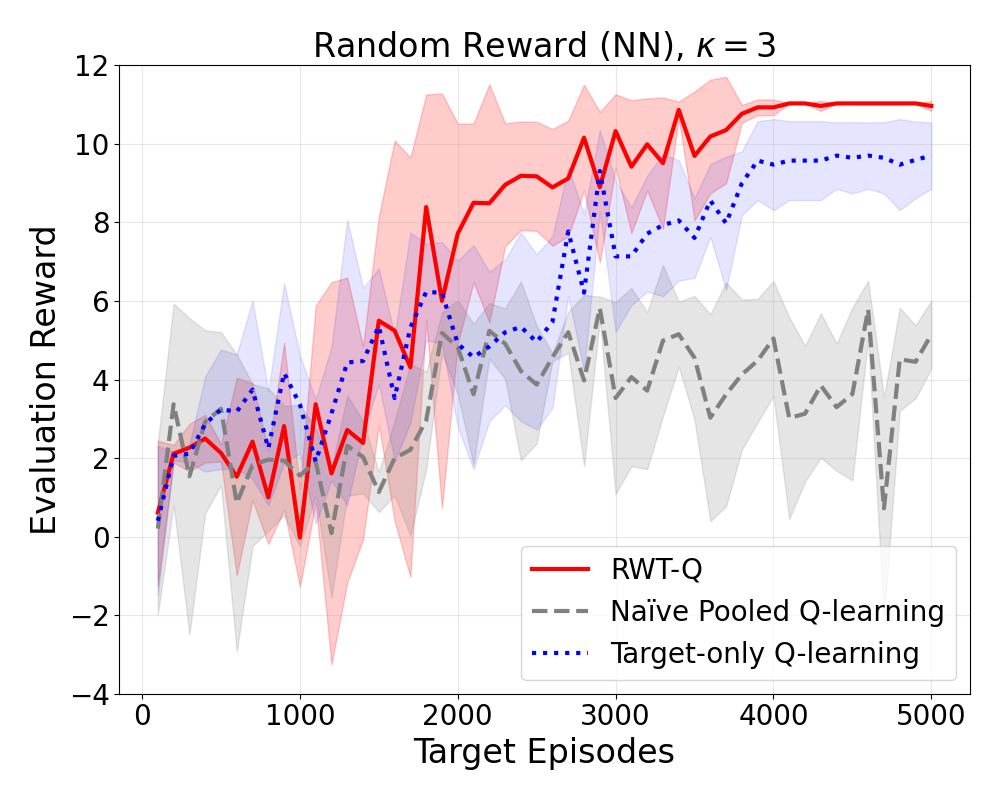}
\includegraphics[width=0.3\linewidth]{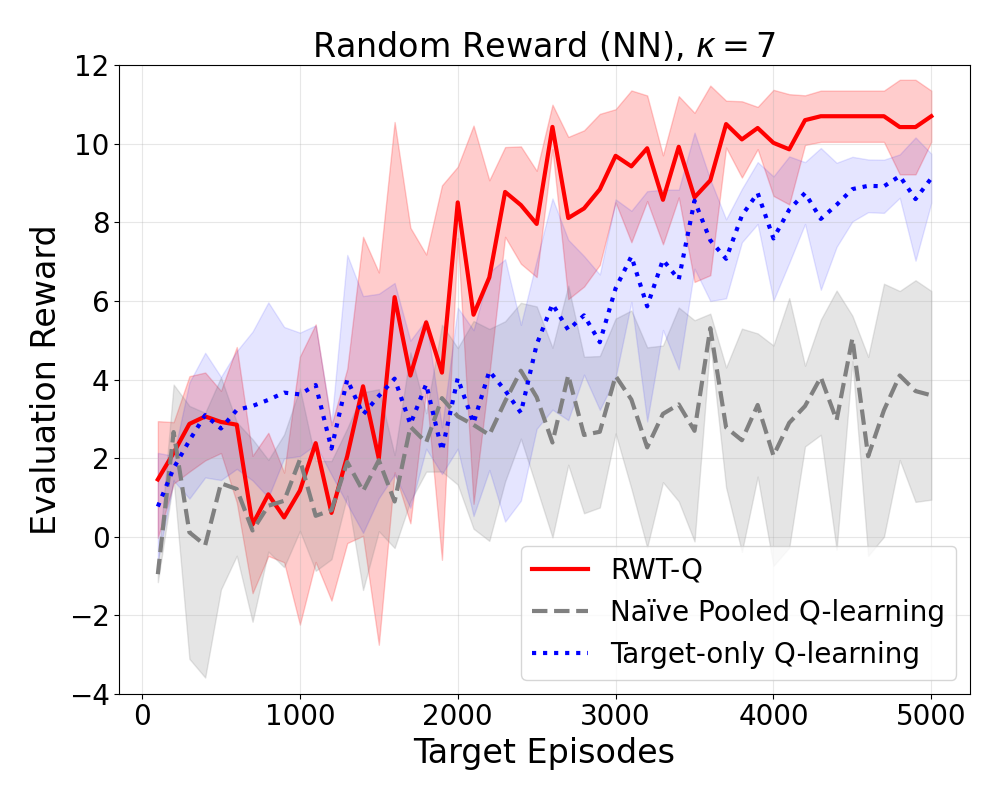}
\caption{Learning curves comparing RWT-$Q$, naïve pooled $Q$-learning, and target-only $Q$-learning. 
{\bf Top}: RandomRewardGridEnv with tabular $Q$-learning. {\bf Bottom}: RandomRewardGridEnv with DQN function approximation. RWT-$Q$ consistently improves sample efficiency, while naïve pooling often degrades performance due to Bellman misalignment.}
\label{fig:exp-res}
\end{figure*}

We evaluate {\it RWT $Q$-learning} on controlled grid-world benchmarks designed to test transfer under structured one-step reward shifts and to expose the failure of naïve Bellman reuse. All methods use identical exploration strategies and differ only in how source data are incorporated.

\subsection{Experimental Setup}

\textbf{RandomRewardGridEnv.} To control task mismatch, we construct environments with synthetic reward surfaces. The target reward is generated as a fixed Q-table with i.i.d. Gaussian entries, while the source reward is defined as
$$
R_{\text{source}}(s,a)=R_{\text{target}}(s,a)+\Delta(s,a),
$$
where $\Delta(s,a) \overset{\text{i.i.d.}}{\sim} \mathcal{N}(0, \sigma_\Delta^2)
$ is a zero-mean Gaussian perturbation, where the standard deviation $\sigma_\Delta$ controls the complexity of the task shift. This environment directly instantiates the structured one-step reward-difference model in Section \ref{sec:RWT-Aligned-Bellman}.

\textbf{Benchmarks.}
We compare the following methods:

\textbf{$\bullet$ RWT-$Q$ (ours).} Implements the two-stage Bellman-aligned update: a source-based baseline Q-function combined with a target-only correction term. This corresponds exactly to the RWT decomposition analyzed in Sections \ref{sec:OFU-RWT-Q-general}-\ref{sec:rkhs-assume}.

\textbf{$\bullet$ Na\"{i}ve pooled $Q$-learning.} Trains a single Q-function on pooled source and target data, implicitly assuming tasks are identical. This baseline reflects common heuristic transfer approaches and directly violates Bellman alignment.

\textbf{$\bullet$ Target-only $Q$-learning.} Standard $Q$-learning trained solely on target-task data, ignoring source information.

All methods use identical exploration strategies ($\epsilon$-greedy) and differ only in how source data are incorporated.

\textbf{Implementations.} 
We run experiments with both {\it tabular $Q$-learning} and {\it neural network} function approximation. 
For neural experiments, all methods share the same architecture. 
 We vary the source-target data ratio $\kappa$.
 Results are averaged over multiple random seeds. See more implementation details in \cref{appsec:exp}.

\subsection{Results} 

Figure \ref{fig:exp-res} reports undiscounted episode return averaged over seeds.
Across all setting, {\it RWT-Q consistently outperforms both baselines}. In particular:

$\bullet$ Na\"{i}ve pooling often degrades performance relative to target-only learning, confirming the structural Bellman bias predicted in Section 2.

$\bullet$ RWT-$Q$ reliably improves sample efficiency, especially when the reward shift is structured but nontrivial.

$\bullet$ Performance gains persist under neural network approximation, indicating that Bellman alignment provides benefits beyond the tabular setting.

Overall, these results empirically validate the central theoretical message of the paper: {\it transfer in online RL should respect Bellman alignment}, and na\"{i}ve reuse of Bellman updates can be harmful even when tasks are closely related.

\section{Conclusion}

We showed that na\"{i}ve transfer in online reinforcement learning fails due to continuation-value-dependent Bellman bias, and identified one-step Bellman alignment as the correct abstraction for principled transfer. We introduced re-weighted targeting (RWT), an operator-level correction that removes this dependence and reduces task mismatch to a fixed one-step correction, yielding a two-stage learning framework that separates variance reduction from bias correction.

Instantiated as RWT $Q$-learning and analyzed via an optimism-based variant, our theory establishes regret bounds that scale with the complexity of the task shift rather than the target MDP. Empirical results with both tabular and neural network implementations further demonstrate that Bellman alignment provides consistent benefits beyond the specific settings covered by our analysis.

\bibliographystyle{plainnat} 
\bibliography{main}

@article{zhou2025prior,
	title={Prior-Aligned Meta-RL: Thompson Sampling with Learned Priors and Guarantees in Finite-Horizon MDPs},
	author={Zhou, Runlin and Chen, Chixiang and Chen, Elynn},
	journal={arXiv preprint arXiv:2510.05446},
	year={2025}
}

@article{vakili2024kernel,
  title={Kernel-based function approximation for average reward reinforcement learning: an optimist no-regret algorithm},
  author={Vakili, Sattar and Olkhovskaya, Julia},
  journal={Advances in Neural Information Processing Systems},
  volume={37},
  pages={25401--25425},
  year={2024}
}

@article{sam2024limits,
  title={The limits of transfer reinforcement learning with latent low-rank structure},
  author={Sam, Tyler and Chen, Yudong and Yu, Christina L},
  journal={Advances in Neural Information Processing Systems},
  volume={37},
  pages={108262--108330},
  year={2024}
}

@inproceedings{Lee2025MultiTaskRL,
  title={Provably Efficient Multi-Task Reinforcement Learning with Shared Linear Structure},
  author={Lee, Donghwan and Zhang, Kexin and Yang, Zhuoran and Wang, Mengdi},
  booktitle={Proceedings of the Thirty-Ninth AAAI Conference on Artificial Intelligence},
  year={2025}
}

@inproceedings{agarwal2023representational,
title={Provable Benefits of Representational Transfer in Reinforcement Learning},
author={Agarwal, Arpit and Song,Yandong and Sun, Wen and Wang, Kevin and Wang, Mengdi and Zhang, Xin},
booktitle={Proceedings of the Thirty Sixth Annual Conferenceon Learning Theory},
pages={2114--2187},
year={2023},
publisher={PMLR}
}

@inproceedings{zhang2025transfer,
title={Transfer Faster, Price Smarter: Minimax Dynamic Pricing under Cross-Market Preference Shift},
author={Zhang, Yi and Chen, Elynn and Yan, Yujun},
booktitle={Advances in Neural Information Processing Systems},
year={2025},
url={https://arxiv.org/abs/2505.17203}
}

@inproceedings{yang2020function,
  title={On function approximation in reinforcement learning: optimism in the face of large state spaces},
  author={Yang, Zhuoran and Jin, Chi and Wang, Zhaoran and Wang, Mengdi and Jordan, Michael I},
  booktitle={Proceedings of the 34th International Conference on Neural Information Processing Systems},
  pages={13903--13916},
  year={2020}
}

@article{vakili2023kernelized,
  title={Kernelized reinforcement learning with order optimal regret bounds},
  author={Vakili, Sattar and Olkhovskaya, Julia},
  journal={Advances in Neural Information Processing Systems},
  volume={36},
  pages={4225--4247},
  year={2023}
}

@article{srinivas2009gaussian,
  title={Gaussian process optimization in the bandit setting: No regret and experimental design},
  author={Srinivas, Niranjan and Krause, Andreas and Kakade, Sham M and Seeger, Matthias},
  journal={arXiv preprint arXiv:0912.3995},
  year={2009}
}

@inproceedings{chai2025transition,
    title={Transition Transfer {$Q$}-Learning with Composite MDP Structures},
    author={Chai, Jinhang and Chen, Elynn and Yang, Lin},
    booktitle={Proceedings of the 42nd International Conference on Machine Learning},
    series={Proceedings of Machine Learning Research},
    volume={267},
    pages={7089--7106},
    year={2025},
    publisher={PMLR},
    url={https://proceedings.mlr.press/v267/chai25b.html}
}

@inproceedings{brunskill2013sample,
  title={Sample complexity of multi-task reinforcement learning},
  author={Brunskill, Emma and Li, Lihong},
  booktitle={Proceedings of the Twenty-Ninth Conference on Uncertainty in Artificial Intelligence},
  pages={122--131},
  year={2013}
}

@inproceedings{hu2021near,
  title={Near-optimal representation learning for linear bandits and linear rl},
  author={Hu, Jiachen and Chen, Xiaoyu and Jin, Chi and Li, Lihong and Wang, Liwei},
  booktitle={International Conference on Machine Learning},
  pages={4349--4358},
  year={2021},
  organization={PMLR}
}

@article{calandriello2014sparse,
  title={Sparse multi-task reinforcement learning},
  author={Calandriello, Daniele and Lazaric, Alessandro and Restelli, Marcello},
  journal={Advances in neural information processing systems},
  volume={27},
  year={2014}
}

@article{chen2025transfer,
title={Transfer {$Q$}-learning for finite-horizon Markov decision processes},
author={Chen, Elynn and Li, Sai and Jordan, Michael I},
journal={Electronic Journal of Statistics},
volume={19},
number={2},
pages={5289--5312},
year={2025},
doi={10.1214/25-EJS2459}
}

@article{chen2025datadriven,
    title={Data-Driven Knowledge Transfer in Batch {$Q$}-Learning},
    author={Chen, Elynn and Chen, Xi and Jing, Wenbo},
    journal={Journal of the American Statistical Association},
    year={2025},
    doi={10.1080/01621459.2025.2603731}
}

@inproceedings{jin2020provably,
    title={Provably efficient reinforcement learning with linear function approximation},
    author={Jin, Chi and Yang, Zhuoran and Wang, Zhaoran and Jordan, Michael I},
    booktitle={Conference on Learning Theory},
    pages={2137--2143},
    year={2020}
}

@article{taylor2009transfer,
    title={Transfer learning for reinforcement learning domains: A survey},
    author={Taylor, Matthew E and Stone, Peter},
    journal={Journal of Machine Learning Research},
    volume={10},
    number={7},
    year={2009}
}

@article{chen2026transfer,
	title={Transfer Learning for Contextual Joint Assortment-Pricing under Cross-Market Heterogeneity},
	author={Chen, Elynn and Chen, Xi and Zhang, Yi},
	journal={arXiv preprint arXiv:2603.18114},
	year={2026}
}


\end{document}